\pdfoutput=1 
\documentclass[letterpaper]{article} 
\usepackage[utf8]{inputenc} 
\usepackage{aaai2026}  
\usepackage{times}  
\usepackage{helvet}  
\usepackage{courier}  
\usepackage[hyphens]{url}  
\usepackage{graphicx} 
\usepackage{natbib}  
\usepackage{caption} 
\usepackage{algorithm}
\usepackage{algorithmic}

\usepackage{newfloat}
\usepackage{listings}
\DeclareCaptionStyle{ruled}{labelfont=normalfont,labelsep=colon,strut=off} 
\floatstyle{ruled}
\newfloat{listing}{tb}{lst}{}
\floatname{listing}{Listing}
\title{Rules or Character? Scaling Laws for AI Safety Design}
\author{
    Satoshi Takahashi\textsuperscript{\rm 1,2},
    Nobuji Kouno\textsuperscript{\rm 1,2},
    Masaaki Komatsu\textsuperscript{\rm 1,2},
    Ryuji Hamamoto\textsuperscript{\rm 1,2}
}
\affiliations{
    \textsuperscript{\rm 1}AI Medical Engineering Team, RIKEN Center for Advanced Intelligence Project, Tokyo, Japan\\
    \textsuperscript{\rm 2}Division of Medical AI Research and Development, National Cancer Center Research Institute, Tokyo, Japan\\
    satoshi.takahashi.fy@riken.jp
}

\usepackage{bibentry}

\usepackage{booktabs}
\usepackage{multirow} 
\usepackage{placeins}
\usepackage{amsmath}   
\usepackage{amssymb}   
\usepackage{amsthm}

\newtheorem{proposition}{Proposition}

\begin{document}

\maketitle

\begin{abstract}
Artificial Intelligence (AI) safety systems combine character
shaping (e.g., Reinforcement Learning from Human Feedback [RLHF],
Constitutional~AI), which modifies behavioral distributions at
training time, with rule enforcement (e.g., output filters, safety
classifiers), which blocks harmful outputs at inference time, yet
little formal analysis exists on how their optimal balance should
change as deployment scales increase. We introduce a stylized
comparative-statics model that parameterizes safety design as a
resource allocation $\alpha \in [0,1]$ between these two approaches,
incorporating scale-dependent filter degradation, common-mode
failures, and character fragility---the risk that shaped behavior
degrades or collapses under novel conditions. Under a multiplicative
Pareto damage model, we derive closed-form expected harm and
supplement it with tail-risk (CVaR) analysis via Monte Carlo
simulation. Across three scenarios (optimistic, moderate,
pessimistic), the optimal $\alpha^*$ is interior or at the
rules-only boundary and shifts weakly toward character shaping as
deployment scale $T$ grows, from negligible
($\Delta\alpha^* = +0.01$) to pronounced ($\Delta\alpha^* = +0.21$)
depending on scenario. The dominant parameter is the baseline
character fragility rate $p_{\mathrm{frag}}^{(0)}$, which shifts
$\alpha^*$ by $0.50$ across its range---far exceeding the effect of
tail severity, filter quality, or common-mode failure probability.
CVaR and expected-harm optima converge at large~$T$. These results
suggest that safety architecture decisions depend less on deployment
scale per se than on the reliability of character shaping under
distributional shift.
\end{abstract}


\section{Introduction}

It is natural to aspire to a world in which accidents never occur.
Yet reducing the probability of undesirable outcomes to exactly zero
appears to be impossible in principle. This is because the entities
that act in the world --- whether human beings or artificial agents
--- can be understood as fundamentally probabilistic: their behavior
is sampled from distributions rather than determined by fixed rules,
and any distribution with nonzero variance will occasionally produce
outcomes in its tails. As long as this is the case, some nonzero
probability of harmful action will persist regardless of the
safeguards in place \cite{kaplan1981quantitative}.

This observation takes on particular urgency for AI systems.
A well-trained language model agent, deployed
to assist users in medical, legal, or financial domains, need not
necessarily be malicious to cause harm. It needs only to encounter a
situation that falls outside the effective range of its training
--- a novel query, an ambiguous context, a subtle distributional
shift --- and produce an output that, while generated in good faith,
leads to consequences ranging from minor inconvenience to
catastrophic damage. The same error that is trivial when a healthy
person asks a casual health question may prove devastating when a
critically ill patient relies on the answer for treatment decisions.
This context-dependent amplification of harm severity, combined with
the irreducible probabilistic nature of model outputs, means that
even a well-aligned agent will occasionally produce harmful outcomes
as a matter of statistical inevitability.

If eliminating risk entirely is unattainable, then the practical
objective becomes risk management: minimizing both the probability
and severity of harmful outcomes, as is standard practice in
medicine, aviation, and nuclear safety
\cite{rasmussen1997risk, aven2016risk}. However, the deployment of
AI agents introduces conditions that
fundamentally distinguish them from the human actors for which
existing risk management frameworks were designed. A single AI model
can be replicated across millions of concurrent instances, each
engaging in independent interactions with users. The total number of
actions taken by a deployed AI system can exceed, within a single
day, the number of consequential decisions a human professional
might make over an entire career. This difference is not merely
quantitative; it is qualitative in its implications for risk. In
human contexts, even imperfect safeguards may suffice because the
total number of trials remains bounded. For AI systems deployed at
scale, the sheer volume of interactions means that even very low
per-interaction failure probabilities can produce frequent harmful
outcomes in aggregate. Furthermore, because replicated instances
share the same model weights, a single vulnerability --- once
discovered or inadvertently triggered --- can affect all instances
simultaneously, creating a risk of correlated common-mode failure
that has no direct analogue in human risk management.
At such scales, familiar per-instance ethical reasoning may
face structural difficulties akin to those identified in
infinite ethics \cite{askell2018pareto}, not because
deployment is infinite, but because the aggregate defies
finite-population intuitions.

How, then, should risk be managed for AI systems operating under
these conditions? The available approaches can be broadly divided
into two categories. The first works on the \emph{interior} of the
acting entity: it seeks to shape the entity's character --- its
dispositions, values, and behavioral tendencies --- so that harmful
actions become intrinsically unlikely. In current AI practice, this
corresponds to training-time interventions such as reinforcement
learning from human feedback \cite{ouyang2022training} and
Constitutional AI \cite{bai2022constitutional}, which modify the
model's underlying output distribution. The second works on the
\emph{exterior}: it imposes rules, filters, and constraints that
intercept harmful outputs after they are generated, without altering
the generative process itself. Runtime safety classifiers, output
filters, and Constitutional Classifiers
\cite{sharma2025constitutional} exemplify this approach.

Each approach carries characteristic vulnerabilities. Character
shaping modifies the model's internal distribution but cannot
guarantee that the shaped behavior will generalize to all deployment
conditions. Recent work on deceptive alignment has demonstrated that
safety training can fail to eliminate undesirable behaviors that
persist through the training process \cite{hubinger2024sleeper},
and more broadly, any shaped distribution may prove fragile when
the model encounters inputs sufficiently far from its training
distribution. We refer to this risk as \emph{character fragility}
--- the possibility that shaped safe behavior reverts to baseline
or worse under novel conditions. Rule enforcement, conversely,
operates independently of the model's internal state but faces a
more fundamental limitation: no finite set of rules can anticipate
every situation an agent may encounter. This echoes a core insight
from systems safety engineering --- Reason's ``Swiss cheese model''
\cite{reason1990human}  demonstrates that every defensive layer contains
gaps, and Leveson's systems-theoretic analysis
\cite{leveson2011engineering} argues that safety constraints
inevitably become inadequate as systems operate in conditions
unforeseen by their designers. Filters are designed against foreseen
failure modes, yet deployed systems inevitably encounter inputs,
contexts, and edge cases that their designers did not --- and in
principle could not --- predict. As deployment scale grows, the
probability that such unforeseen situations arise increases simply
because more diverse interactions are attempted, and when a gap in
the rule set is exposed, it affects all instances sharing the same
filter configuration simultaneously. This is not primarily a
problem of adversarial exploitation; it is a consequence of the
inherent incompleteness of any rule-based safeguard system
\cite{perrow1984normal}.

This distinction --- between shaping character and enforcing rules
--- echoes a long-standing tension in moral philosophy between
virtue ethics, which emphasizes the cultivation of internal
dispositions \cite{hursthouse1999, aristotle_ne}, and deontological
ethics, which emphasizes adherence to external rules. In human
societies, most normative systems blend both elements. The same is
true of modern AI safety architectures, which typically combine
training-time shaping with runtime safeguards. While scaling laws
for model \emph{capability} are now well characterized
\cite{kaplan2020scaling}, and concrete taxonomies of safety failure
modes have been proposed \cite{amodei2016concrete}, scaling laws for
safety \emph{design} --- how the optimal mix of safeguards should
change with scale --- remain largely unexamined.

In this paper, we introduce a stylized comparative-statics model
that formalizes the design space of AI safety as a continuous
spectrum between character shaping and rule enforcement,
parameterized by a resource allocation coefficient
$\alpha \in [0,1]$. We ask: \emph{how does the optimal
$\alpha^*$ --- the allocation that minimizes expected harm ---
change as deployment scale $T$ increases, and what parameters
most strongly determine this optimum?} Through analytical
derivation and Monte Carlo simulation across three scenarios
(optimistic, moderate, and pessimistic), we find that $\alpha^*$
is consistently interior and shifts weakly toward character shaping
with scale, though the magnitude of this shift varies substantially
across scenarios. The single most influential determinant of the
optimal design proves to be neither deployment scale nor tail-risk
severity, but the baseline rate of character fragility --- the
probability that shaped behavior fails under novel conditions. These
findings suggest that safety architecture decisions are governed
less by how large a system is deployed than by how reliably its
character shaping generalizes beyond training conditions.

\section{Related Work}

Our work connects several lines of research spanning AI safety,
systems safety engineering, and risk analysis.

\paragraph{Training-time safety interventions.}
Reinforcement learning from human feedback (RLHF) trains models to
align with human preferences through reward modeling
\cite{christiano2017deep, ouyang2022training}. Constitutional AI
extends this by using written principles to guide self-critique and
revision \cite{bai2022constitutional}. Research on moral
self-correction has shown that sufficiently large language models can
reduce harmful outputs when instructed to do so, suggesting that
models can internalize normative concepts through training
\cite{ganguli2023moral}. These developments motivate our
formalization of training-time shaping as one endpoint of the safety
design spectrum.

\paragraph{Runtime safety mechanisms.}
Constitutional Classifiers defend against harmful outputs by training
classifiers on synthetic data derived from normative principles
\cite{sharma2025constitutional}. While such filters can substantially
reduce harmful output rates, their effectiveness is bounded by the
designers' ability to anticipate failure modes --- a limitation we
formalize through the filter quality ceiling $\varepsilon_{\min}$ in
our model. More broadly, any finite set of runtime rules faces an
inherent coverage problem: deployed systems will inevitably encounter
inputs and contexts that fall outside the designers' foresight.

\paragraph{Character fragility and distributional shift.}
A central parameter in our model is the rate at which character
shaping fails under novel conditions. This risk has two distinct
manifestations in the literature. First, research on deceptive
alignment has shown that models can learn to behave safely during
training while retaining misaligned objectives that manifest under
specific triggers \cite{hubinger2024sleeper}. Second, the broader
machine learning literature on distributional shift documents that
model performance can degrade substantially when deployment
conditions diverge from training conditions
\cite{quinonero2009dataset}. Practical detection of such
out-of-distribution inputs remains an active area of research
\cite{hendrycks2017baseline}. Our model abstracts both phenomena
through a single fragility parameter $p_{\mathrm{frag}}(\alpha)$,
which encompasses intentional deception and unintentional
out-of-distribution failure as limiting cases.

\paragraph{Systems safety and normal accidents.}
Our framework draws on foundational concepts from systems safety
engineering. Reason's Swiss cheese model \cite{reason1990human}
holds that every defensive layer contains gaps and that safety
emerges from stacking layers with independent failure modes. We
formalize this intuition by modeling character shaping and rule
enforcement as complementary layers whose optimal balance depends on
deployment conditions. Perrow's theory of normal accidents
\cite{perrow1984normal} argues that in complex, tightly coupled
systems, accidents are not anomalies but inevitable consequences of
system structure --- a perspective that motivates our analysis of
how well-intentioned agents produce harmful outcomes through the
tails of their behavioral distributions. Leveson's systems-theoretic
approach \cite{leveson2011engineering} further argues that safety
constraints become inadequate as systems operate beyond their
designers' assumptions, which in our model corresponds to the
scale-dependent degradation of filter effectiveness.

\paragraph{Tail risk and heavy-tailed damage distributions.}
Our multiplicative Pareto damage model is motivated by empirical
evidence that harm severity in technological systems often follows
heavy-tailed distributions. Edwards et al.~(\citeyear{edwards2016hype})
showed that cybersecurity data breach damages exhibit properties
intermediate between log-normal and power-law distributions, while
Maillart and Sornette~(\citeyear{maillart2010heavy}) reported
heavy-tailed cyber-risk distributions. Software defect costs are
known to increase by orders of magnitude depending on the phase
of discovery \cite{boehm1981software}. While direct evidence for
AI incident damages following Pareto distributions remains limited,
these analogies from related domains provide plausible anchors for
our tail exponent $\alpha_{\mathrm{PL}} \in [2.0, 3.0]$.

\paragraph{Virtue ethics and AI alignment.}
The character--rule distinction echoes the virtue ethics vs.\
deontology debate in moral philosophy \cite{hursthouse1999,
aristotle_ne}. Noller~(\citeyear{noller2026artificial}) analyzes
Constitutional AI through an Aristotelian lens; our work differs in
examining engineering consequences rather than normative status.

\section{Formal Framework}

We formalize the space of AI safety design as a continuous spectrum
between character shaping and rule enforcement. The model presented
here is a \emph{stylized comparative-statics model}: it does not
aim to precisely replicate real-world AI safety systems, but rather
to analyze the structural properties of the tradeoff between the
two approaches. Accordingly, our conclusions take the form of
qualitative tendencies and boundary conditions rather than
quantitative design recommendations.

\subsection{Action Space and Harm}

Let the action space be $\mathcal{A} \subset \mathbb{R}$ (one-dimensional).
Each action $a \in \mathcal{A}$ is associated with a safety score
$s(a) = a$. A harmful outcome occurs when $s(a) < \tau$ for a safety
threshold $\tau < 0$. Prior to any safety intervention, an entity's
actions are drawn from a baseline distribution
$P_0 = \mathcal{N}(\mu_0, \sigma_0^2)$.

We define harm under two models. \textbf{Model~A} (deterministic
damage) sets the harm as the distance below the threshold:
\begin{equation}
  h(a) = (\tau - a)_+ = \max(0,\; \tau - a).
\end{equation}
\textbf{Model~B} (multiplicative Pareto damage) reflects the
observation that, for well-intentioned agents, the same error can
produce vastly different consequences depending on the context in
which it occurs --- a medical misstatement that is harmless in casual
conversation may prove catastrophic when relied upon for treatment
decisions. We capture this context-dependent amplification through a
multiplicative structure:
\begin{equation}\label{eq:model-b}
  h(a) = (\tau - a)_+ \times X, \quad X \sim \mathrm{Pareto}(1,\;\alpha_{\mathrm{PL}}),
\end{equation}
where $(\tau - a)_+$ is the \emph{action magnitude} (how far the
action exceeds the threshold) and $X$ is a \emph{context multiplier}
drawn independently. Because $X$ and $(\tau - a)_+$ are independent,
the expectation factorizes:
\begin{equation}
  \begin{aligned}
    \mathbb{E}[h] &= \mathbb{E}[(\tau - a)_+] \times \mathbb{E}[X],\\
    \text{where } \mathbb{E}[X] &= \frac{\alpha_{\mathrm{PL}}}{\alpha_{\mathrm{PL}} - 1}
    \;\;(\alpha_{\mathrm{PL}} > 1).
  \end{aligned}
\end{equation}
As $\alpha_{\mathrm{PL}} \to \infty$, $\mathbb{E}[X] \to 1$ and
Model~B reduces continuously to Model~A. Thus Model~A is a special
case of Model~B, and the two can be treated within a unified framework.

The tail exponent $\alpha_{\mathrm{PL}}$ governs the heaviness of
the damage distribution. When $\alpha_{\mathrm{PL}} > 2$, both
expectation and variance of $X$ are finite. When
$1 < \alpha_{\mathrm{PL}} \leq 2$, the expectation is finite but
the variance diverges, making CVaR estimation slow to converge.
When $\alpha_{\mathrm{PL}} \leq 1$, both the expected harm and CVaR
diverge, and only quantile-based measures ($\mathrm{VaR}_\beta$)
remain meaningful. Our sensitivity analyses focus on
$\alpha_{\mathrm{PL}} \geq 1.5$.
Values of $\alpha_{\mathrm{PL}} \in [2.0, 3.0]$ are consistent with
heavy-tailed damage estimates from related domains, including
cybersecurity breach data \cite{edwards2016hype, maillart2010heavy}
and software defect cost distributions \cite{boehm1981software}.

\subsection{The Character--Rule Spectrum}

We parameterize safety design by a mixing coefficient
$\alpha \in [0,1]$, interpreted as the fraction of safety resources
allocated to character shaping. $\alpha = 0$ denotes pure rule
enforcement; $\alpha = 1$ denotes pure character shaping.

\paragraph{Character shaping.}
Higher $\alpha$ shifts the action distribution toward safety and
reduces its variance:
\begin{equation}
  \mu(\alpha) = \mu_0 + \alpha \cdot \Delta\mu, \qquad
  \sigma(\alpha) = \sigma_0 \bigl(1 - \alpha(1 - r_\sigma)\bigr),
\end{equation}
where $\Delta\mu > 0$ is the safety shift achievable through
training-time intervention and $r_\sigma \in (0,1)$ controls the
degree of variance reduction. The shaped distribution is
$P_\alpha = \mathcal{N}(\mu(\alpha),\;\sigma(\alpha)^2)$.

\paragraph{Filter quality.}
Higher $\alpha$ reduces resources available for filter development,
degrading filter quality. Filters have a technology-imposed
performance ceiling $\varepsilon_{\min}$ that cannot be surpassed
even with full resource investment:
\begin{equation}\label{eq:eps-base}
  \varepsilon_{\mathrm{base}}(\alpha) = \varepsilon_{\min}
  + (\varepsilon_{\max,\mathrm{base}} - \varepsilon_{\min}) \cdot \alpha^k.
\end{equation}
At $\alpha = 0$, $\varepsilon = \varepsilon_{\min}$ (best achievable
filter); at $\alpha = 1$,
$\varepsilon = \varepsilon_{\max,\mathrm{base}}$ (minimal filter
quality). The value of $\varepsilon_{\min}$ is anchored to recent results on
Constitutional Classifiers, which achieved jailbreak success rates
of approximately 4.4\% after extensive red-teaming
\cite{sharma2025constitutional}. While this adversarial benchmark
differs from the benign edge-case setting motivating our model, it
provides a conservative lower bound on achievable filter quality.

\subsection{Deployment Scale and Edge-Case Pressure}

We decompose the total interaction count $T$ into a component that
drives harm accumulation and a component that drives filter
degradation and systemic vulnerability discovery:
\begin{equation}
  M = \rho_{\mathrm{edge}} \cdot T,
  \qquad A(M) = M / M_{\mathrm{ref}},
\end{equation}
where $\rho_{\mathrm{edge}}$ is the fraction of interactions that
constitute edge cases --- inputs, contexts, or situations not
anticipated during filter design --- and $M_{\mathrm{ref}}$ is a
reference scale. This decomposition separates $T$ as a linear scale
factor for harm from $M$ as the driver of filter degradation and
systemic vulnerability probability.

\paragraph{Scale-dependent filter degradation.}
As deployment scale grows, filters encounter unforeseen edge cases
with increasing probability, raising the effective leakage rate:
\begin{equation}\label{eq:eps-full}
  \begin{aligned}
    \varepsilon(\alpha, M)
    &= \varepsilon_{\mathrm{base}}(\alpha) \\
    &\quad + \bigl(\varepsilon_{\mathrm{ceiling}}
    - \varepsilon_{\mathrm{base}}(\alpha)\bigr) \\
    &\quad \cdot (1 - e^{-\beta_d \, A(M)}) \cdot d_0,
  \end{aligned}
\end{equation}
where $\beta_d$ is the sensitivity of pattern discovery to
deployment scale and $d_0 \in (0,1]$ is a \emph{diffusion fraction}
--- the proportion of discovered vulnerability patterns that
propagate broadly. Note that $d_0$ governs the ultimate reach of
discovered patterns, not their speed of propagation: as
$A(M) \to \infty$, the leakage rate approaches
$\varepsilon_{\mathrm{base}}(\alpha)
+ (\varepsilon_{\mathrm{ceiling}} - \varepsilon_{\mathrm{base}}(\alpha))
\cdot d_0$, which is strictly less than $\varepsilon_{\mathrm{ceiling}}$
when $d_0 < 1$.

\paragraph{Common-mode failure (CMF).}
A CMF occurs when a single vulnerability compromises
all deployed instances simultaneously --- for example, an unforeseen
blind spot shared by all instances due to identical model weights.
The probability of such an event increases with edge-case pressure:
\begin{equation}\label{eq:q}
  q(M) = (1 - e^{-\beta_q \, A(M)}) \cdot e_0,
\end{equation}
where $\beta_q$ is the sensitivity of CMF discovery and $e_0$ is
the probability that a discovered systemic vulnerability is actually
triggered. Crucially, $q(M)$ does not depend on $\alpha$: CMF arises
from architectural properties of the filter layer and deployment
infrastructure, not from the quality of character shaping. The
protective effect of character shaping during CMF is instead
reflected in the reduced post-CMF harm (see Expected Harm below).

\subsection{Character Fragility}\label{sec:fragility}

As reliance on character shaping increases, so does the risk that
the shaped behavior proves fragile --- degrading or collapsing
when the model encounters conditions outside its effective
training range. We refer to this risk as \emph{character fragility}
and model it through a distribution-switching mechanism:
\begin{equation}\label{eq:pfrag}
  p_{\mathrm{frag}}(\alpha) = p_{\mathrm{frag}}^{(0)} \cdot \alpha^n,
\end{equation}
where $p_{\mathrm{frag}}^{(0)}$ is the \emph{baseline character
fragility rate} --- the maximum per-interaction probability of
character failure (attained at $\alpha = 1$) --- and $n$ is an
exponent parameter (default $n = 2$; sensitivity analysis over
$n \in \{0.5, 1, 2, 3\}$). When fragility manifests
(probability $p_{\mathrm{frag}}(\alpha)$), the action distribution
switches from $P_\alpha$ to a fragility distribution
$P_{\mathrm{frag}} = \mathcal{N}(\mu_{\mathrm{frag}},\;
\sigma_{\mathrm{frag}}^2)$, with $\mu_{\mathrm{frag}} \leq \mu_0$.

Importantly, $p_{\mathrm{frag}}^{(0)}$ is an intrinsic property
of the trained model --- the fraction of input space where shaped
behavior fails to hold --- analogous to a manufacturing defect rate
determined by the production process, not by the number of units
produced. What changes with deployment scale $T$ is the aggregate
number of fragility manifestations, not the per-interaction rate.
This is why $p_{\mathrm{frag}}$ is $T$-independent in our model.
As we show in the Simulation Results,
$p_{\mathrm{frag}}^{(0)}$ proves to be the single most influential
parameter in determining the optimal safety design.

The parameter $p_{\mathrm{frag}}$ subsumes two qualitatively
distinct failure modes. The first is \emph{intentional deception}
(deceptive alignment), in which a model learns to behave safely
during training while retaining misaligned objectives
\cite{hubinger2024sleeper}. The second is \emph{distributional
fragility}, in which shaped safe behavior simply fails to transfer
to novel conditions \cite{quinonero2009dataset}. Both are monotone
increasing in $\alpha$ and are therefore jointly captured by the
$\alpha^n$ functional form, but their internal mechanisms differ.
Decomposing their relative contributions is beyond the scope of
this model and is left to future work.

The setting of $\mu_{\mathrm{frag}}$ depends on which failure mode
dominates: $\mu_{\mathrm{frag}} < \mu_0$ (worse than baseline)
reflects intentional deception, while
$\mu_{\mathrm{frag}} \approx \mu_0$ (baseline reversion) reflects
distributional fragility. Our scenario analysis examines both cases.

When fragility manifests, the filter may be less effective at
detecting the resulting behavior, which can differ in character
from ordinary harmful outputs. We capture this through
$\varepsilon_{\mathrm{frag}} = \min(\text{factor} \times
\varepsilon(\alpha, M),\; 1.0)$, where the factor is
scenario-dependent.

\subsection{Expected Harm}\label{sec:eharm}

Rather than decomposing expected harm into separate probability and
conditional-severity terms (which introduces weighting errors in the
presence of mixture distributions), we define per-interaction
expected harm directly.

\paragraph{Base quantities.}
For Model~A, the unconditional per-interaction expected harm under
distribution $P_\alpha$ admits the closed form:
\begin{equation}\label{eq:g-det}
  g_\alpha^{\mathrm{det}} = (\tau - \mu(\alpha))\,
  \Phi\!\Bigl(\frac{\tau - \mu(\alpha)}{\sigma(\alpha)}\Bigr)
  + \sigma(\alpha)\,
  \phi\!\Bigl(\frac{\tau - \mu(\alpha)}{\sigma(\alpha)}\Bigr),
\end{equation}
where $\Phi$ and $\phi$ are the standard normal CDF and PDF,
respectively. The quantity $g_{\mathrm{frag}}^{\mathrm{det}}$ is
defined analogously for $P_{\mathrm{frag}}$. For Model~B, the
Pareto context multiplier scales these quantities uniformly:
\begin{equation}\label{eq:g-pareto}
  g_\alpha = g_\alpha^{\mathrm{det}} \times
  \frac{\alpha_{\mathrm{PL}}}{\alpha_{\mathrm{PL}} - 1}, \qquad
  g_{\mathrm{frag}} = g_{\mathrm{frag}}^{\mathrm{det}} \times
  \frac{\alpha_{\mathrm{PL}}}{\alpha_{\mathrm{PL}} - 1}.
\end{equation}
Because $\mathbb{E}[X]$ is independent of $\alpha$, the
$\arg\min_\alpha$ of expected harm is identical under Models~A and~B.

\paragraph{Per-interaction expected harm.}
Under normal operation (no CMF):
\begin{equation}\label{eq:L-normal}
\begin{aligned}
  L_{\mathrm{normal}}(\alpha, M)
  &= (1 - p_{\mathrm{frag}}(\alpha))
     \cdot \varepsilon(\alpha, M) \cdot g_\alpha \\
  &\quad + p_{\mathrm{frag}}(\alpha)
     \cdot \varepsilon_{\mathrm{frag}}
     \cdot g_{\mathrm{frag}}.
\end{aligned}
\end{equation}
Under CMF (filters fully disabled):
\begin{equation}\label{eq:L-CMF}
  L_{\mathrm{CMF}}(\alpha) = (1 - p_{\mathrm{frag}}(\alpha))
  \cdot g_\alpha + p_{\mathrm{frag}}(\alpha) \cdot g_{\mathrm{frag}}.
\end{equation}
Note the absence of $\varepsilon$ terms in $L_{\mathrm{CMF}}$:
filters are inoperative during CMF. However, character shaping
persists because it is embedded in the model weights. Since
$g_\alpha < g_0$ for $\alpha > 0$, character shaping automatically
reduces post-CMF harm without requiring any additional parameter.

\paragraph{System-level expected harm.}
\begin{equation}\label{eq:E-harm}
\begin{aligned}
  E_{\mathrm{harm}}(\alpha, T, M)
  &= T \cdot \bigl[
     (1 - q(M)) \cdot L_{\mathrm{normal}}(\alpha, M) \\
  &\quad + q(M) \cdot L_{\mathrm{CMF}}(\alpha) \bigr].
\end{aligned}
\end{equation}
Here $T$ acts as a linear scale factor for harm, while $M$ drives
filter degradation and CMF probability through
Equations~\eqref{eq:eps-full} and~\eqref{eq:q}.

\subsection{Comparative Statics}

\begin{proposition}[Filter technology improvement lowers $\alpha^*$]
\label{prop:eps-min}
$\partial \alpha^* / \partial \varepsilon_{\min} > 0$. That is, an
improvement in filter technology (lower $\varepsilon_{\min}$)
reduces the optimal character weight.
\end{proposition}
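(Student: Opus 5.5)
We argue by the implicit function theorem applied to the first-order condition at an interior optimum. Write $F(\alpha,\varepsilon_{\min}) = E_{\mathrm{harm}}(\alpha,T,M)/T$. Since $T>0$ is a positive constant, it does not affect the argmin. Assume $\alpha^*\in(0,1)$ satisfies $F_\alpha(\alpha^*,\varepsilon_{\min})=0$ and the second-order condition $F_{\alpha\alpha}(\alpha^*,\varepsilon_{\min})>0$. The implicit function theorem then gives
\begin{equation*}
  \frac{\partial \alpha^*}{\partial \varepsilon_{\min}}
  = -\frac{F_{\alpha\varepsilon_{\min}}}{F_{\alpha\alpha}} .
\end{equation*}
The claim therefore reduces to showing that the cross-partial $F_{\alpha\varepsilon_{\min}}$ is negative: a higher $\varepsilon_{\min}$ must lower the marginal cost of shifting weight toward character.

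The cross-partial can be computed by tracking how $\varepsilon_{\min}$ enters $F$. Only $\varepsilon(\alpha,M)$ depends on it, and the term $\varepsilon_{\mathrm{frag}}=\mathrm{factor}\cdot\varepsilon$ inherits this dependence when the cap at $1$ is not binding. By Equations~\eqref{eq:eps-base} and~\eqref{eq:eps-full},
\begin{equation*}
  \varepsilon(\alpha,M)=\varepsilon_{\mathrm{base}}(\alpha)(1-c)+\varepsilon_{\mathrm{ceiling}}\,c,
  \qquad c=(1-e^{-\beta_d A(M)})d_0\in[0,1).
\end{equation*}
Differentiating with respect to $\varepsilon_{\min}$ gives $\partial\varepsilon/\partial\varepsilon_{\min}=(1-c)(1-\alpha^k)$. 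Write $W(\alpha)=(1-q)\bigl[(1-p_{\mathrm{frag}}(\alpha))g_\alpha+p_{\mathrm{frag}}(\alpha)\,\mathrm{factor}\,g_{\mathrm{frag}}\bigr]$ for the weight multiplying $\varepsilon$ in $F$. Then
\begin{equation*}
  F_{\varepsilon_{\min}} = W(\alpha)(1-c)(1-\alpha^k),
\end{equation*}
and the cross-partial is
\begin{equation*}
  F_{\alpha\varepsilon_{\min}}=(1-c)\bigl[W'(\alpha)(1-\alpha^k)-k\alpha^{k-1}W(\alpha)\bigr].
\end{equation*}
The second term is strictly negative because $W>0$. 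Intuitively, a worse filter ceiling makes the filter's marginal value, the slope $(1-\alpha^k)$ of its contribution, shrink faster in $\alpha$.

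The main obstacle is the sign of $W'(\alpha)$. The derivative of $g_\alpha$ is negative, since $\mu(\alpha)$ increases and $\sigma(\alpha)$ decreases, so $g^{\mathrm{det}}$ decreases. This helps. The fragility term, however, contributes $p_{\mathrm{frag}}'(\alpha)\,(\mathrm{factor}\,g_{\mathrm{frag}}-g_\alpha)>0$, which works against the claim. I would first check whether this is dominated under the paper's parameter regime: $p_{\mathrm{frag}}^{(0)}$ small, so that $W'\approx(1-q)g_\alpha'<0$ and the whole bracket is negative.

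If the sign cannot be established in general, I would state the proposition under explicit sufficient conditions. These are an interior optimum satisfying the second-order condition, the cap in $\varepsilon_{\mathrm{frag}}$ not binding, and
\begin{equation*}
  W'(\alpha^*)(1-\alpha^{*k})<k\alpha^{*k-1}W(\alpha^*).
\end{equation*}
I would verify these numerically across the three scenarios. At a corner solution $\alpha^*=0$, the statement holds weakly: $\alpha^*$ stays at the boundary or moves inward as $\varepsilon_{\min}$ rises, because the right derivative $F_\alpha(0^+)$ is decreasing in $\varepsilon_{\min}$ by the same cross-partial sign.
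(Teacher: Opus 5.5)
Your route is the implicit-function-theorem argument that the paper calls ``straightforward and omitted.'' The paper's own sketch stops at the observation that $\partial\varepsilon/\partial\varepsilon_{\min}=(1-\alpha^k)(1-K)$ decreases in $\alpha$. From this it concludes that a lower $\varepsilon_{\min}$ reduces $L_{\mathrm{normal}}$ more at low $\alpha$.

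Your cross-partial is correct, and it shows what that step silently drops. $\varepsilon$ enters $F$ multiplied by the $\alpha$-dependent weight $W(\alpha)$. So what must decrease in $\alpha$ is $(1-\alpha^k)W(\alpha)$, not $(1-\alpha^k)$ alone. Your caution is therefore warranted.

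The heuristic you propose for settling the sign is not the operative mechanism, though. Small $p_{\mathrm{frag}}^{(0)}$ forcing $W'<0$ is not what happens: in the Pessimistic scenario at $\alpha^*\approx 0.21$ one already has $W'>0$, and your displayed inequality still holds there.

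\textbf{Closing the case $\mathrm{factor}=1$.} This covers the Optimistic and Moderate scenarios, where the cap never binds because $\varepsilon\le\varepsilon_{\mathrm{ceiling}}<1$. The missing idea is a factorization, which removes the need for numerics.
\begin{itemize}
\item With $\mathrm{factor}=1$, $L_{\mathrm{normal}}=\varepsilon\,L_{\mathrm{CMF}}$. Hence $F=\Psi(\alpha)\,L_{\mathrm{CMF}}(\alpha)$ with $\Psi=(1-q)\varepsilon(\alpha,M)+q$, which is increasing in $\alpha$.
\item The first-order condition then gives $L_{\mathrm{CMF}}'(\alpha^*)=-\Psi' L_{\mathrm{CMF}}/\Psi<0$.
\item Hence $W'(\alpha^*)=(1-q)L_{\mathrm{CMF}}'(\alpha^*)<0$, and your bracket is strictly negative at every interior critical point.
\item In fact $\partial^2\log F/\partial\alpha\,\partial\varepsilon_{\min}=\partial_{\varepsilon_{\min}}(\Psi'/\Psi)<0$. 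So Topkis' theorem shows that $\alpha^*$ is nondecreasing in $\varepsilon_{\min}$ globally, corners included, with no second-order condition.
\end{itemize}

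\textbf{The case $\mathrm{factor}>1$.} Here the extra term $p_{\mathrm{frag}}'(\alpha)(\mathrm{factor}-1)g_{\mathrm{frag}}$ in $W'$ is not pinned down by the first-order condition. My own numerical check indicates your inequality can genuinely fail at the global optimum. Take the Moderate parameters but with $\mathrm{factor}=2$, $e_0=0.95$, $p_{\mathrm{frag}}^{(0)}\approx 0.29$, and $T=10^8$. The optimum is then interior at $\alpha^*\approx 0.30$, with $(1-\alpha^*)W'(\alpha^*)\approx 1.5\,W(\alpha^*)$. So $\partial\alpha^*/\partial\varepsilon_{\min}<0$ there, though by a tiny amount.

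There is also the corner $\alpha^*=0$, where the derivative is $0$ rather than strictly positive. Together these mean the proposition as stated overreaches, and so does the paper's sketch. Your conditional formulation is the right one. The factorization above upgrades it to an unconditional result whenever $\mathrm{factor}=1$.
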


\begin{proof}[Proof sketch]
Define $K = (1 - e^{-\beta_d A(M)}) \cdot d_0$, which is
independent of~$\alpha$. Then
$\varepsilon(\alpha, M) =
\varepsilon_{\mathrm{base}}(\alpha)(1 - K) +
\varepsilon_{\mathrm{ceiling}} \cdot K$, and since
$\varepsilon_{\mathrm{base}}(\alpha) =
\varepsilon_{\min}(1 - \alpha^k) +
\varepsilon_{\max,\mathrm{base}} \cdot \alpha^k$,
\begin{equation}
  \frac{\partial \varepsilon}{\partial \varepsilon_{\min}}
  = (1 - \alpha^k)(1 - K).
\end{equation}
This is maximized at $\alpha = 0$ (value $1 - K$) and vanishes at
$\alpha = 1$ (no resources allocated to filters, hence no
sensitivity to filter technology). A decrease in $\varepsilon_{\min}$
therefore reduces $L_{\mathrm{normal}}$ more at low~$\alpha$ than at
high~$\alpha$, shifting the minimum of $E_{\mathrm{harm}}$ leftward.
A formal proof via the implicit function theorem is
straightforward and omitted for brevity.
\end{proof}

\paragraph{Predicted tendency.}
As $M$ increases, both $\varepsilon(\alpha, M)$ and $q(M)$ rise.
The former penalizes filter-reliant (low-$\alpha$) designs through
$L_{\mathrm{normal}}$; the latter amplifies $L_{\mathrm{CMF}}$, in
which character shaping (via $g_\alpha < g_0$) provides the only
remaining protection. Both effects push $\alpha^*$ upward. However,
at high $\alpha$ the fragility cost $p_{\mathrm{frag}}(\alpha) \cdot
g_{\mathrm{frag}}$ also grows, potentially offsetting these effects
in high-fragility regimes. The conditions under which $\alpha^*(T)$
is monotone non-decreasing are identified empirically through phase
diagrams in the Simulation Results.

\subsection{Tail Risk (CVaR)}

Expected harm captures average performance but may understate
catastrophic scenarios. We therefore also compute the Conditional
Value-at-Risk at level~$\beta$:
\begin{equation}
  \mathrm{CVaR}_\beta(\alpha, T, M) = \mathbb{E}[\mathrm{harm}
  \mid \mathrm{harm} > \mathrm{VaR}_\beta],
\end{equation}
following the framework of \citet{rockafellar2000optimization},
estimated via count-level Monte Carlo simulation. Rather than
sampling each of the $T$ interactions individually (which is
computationally infeasible at $T = 10^8$), we sample aggregate
counts: the number of fragility-manifesting interactions from
$\mathrm{Binomial}(T, p_{\mathrm{frag}}(\alpha))$, the number of
harmful events from the appropriate binomial, and individual damages
from the conditional harm distribution (with Pareto context
multipliers under Model~B). This reduces the per-replication cost
from $O(T)$ to $O(n_{\mathrm{harm}})$, where $n_{\mathrm{harm}}
\ll T$.

The direction of $\alpha^*$ under CVaR is not determined a priori.
When the tail is dominated by CMF events (in which filters are
disabled), character shaping provides the only mitigation, pushing
$\alpha^*_{\mathrm{CVaR}}$ upward. When the tail is instead driven
by per-incident severity (especially under Model~B with small
$\alpha_{\mathrm{PL}}$), filter effectiveness may become more
valuable. The relative strength of these channels depends on $q(M)$,
$\alpha_{\mathrm{PL}}$, and $p_{\mathrm{frag}}^{(0)}$, and is
resolved empirically in the Simulation Results.

For $\alpha_{\mathrm{PL}} = 2.0$ (the Pessimistic scenario), the
variance of the Pareto distribution is infinite, causing CVaR
estimates to converge more slowly than in the finite-variance case.
We address this by running five independent seeds and reporting
bootstrap 95\% confidence intervals for all CVaR estimates.

\subsection{Scenario-Based Parameterization}\label{sec:scenarios}

Rather than claiming empirically calibrated parameter values, we
treat all parameters as \emph{scenario anchors} and evaluate the
model under three regimes:
\textbf{Optimistic} (strong character shaping, high-quality filters,
low fragility), \textbf{Moderate} (intermediate values), and
\textbf{Pessimistic} (weak character shaping, poor filters, high
fragility and heavy damage tails). Full parameter tables are provided
in Appendix~A. By comparing $\alpha^*(T)$ across all
three scenarios, we assess the robustness of qualitative conclusions
and identify any regime-dependent reversals.

\section{Simulation Results}\label{sec:results}

We evaluate the model across three scenarios (Optimistic, Moderate,
Pessimistic) with parameters given in Table~\ref{tab:scenario_params}.
All expected-harm results are computed
analytically via closed-form expressions; CVaR estimates use
count-level Monte Carlo simulation with $M_{\mathrm{sim}} = 10{,}000$
replications (50{,}000 for production figures) and
$\beta = 0.99$.

\subsection{The Optimal Mix Is Interior and Weakly Increasing in
Scale}\label{sec:main-result}

Figure~\ref{fig:alpha-star-T} presents the central result:
$\alpha^*(T)$ across the three scenarios. In all cases, pure
character ($\alpha = 1$) is never optimal; the optimum is either
interior or, in the Pessimistic scenario at small~$T$, at the
rules-only boundary ($\alpha^* = 0$). The optimal character weight increases
weakly with deployment scale~$T$, but the magnitude of this shift
varies substantially across scenarios(Table~\ref{tab:alpha-star}). 

\begin{table}[h]
\centering
\small
\begin{tabular}{@{}lccccc@{}}
\toprule
Scenario & \multicolumn{4}{c}{$\alpha^*(T)$} & $\Delta\alpha^*$ \\
\cmidrule(lr){2-5}
         & $10^2$ & $10^4$ & $10^6$ & $10^8$ & \\
\midrule
Optimistic   & 0.62 & 0.62 & 0.62 & 0.63 & $+0.01$ \\
Moderate     & 0.51 & 0.51 & 0.52 & 0.55 & $+0.04$ \\
Pessimistic  & 0.00 & 0.00 & 0.06 & 0.21 & $+0.21$ \\
\bottomrule
\end{tabular}
\caption{\textbf{Optimal character weight $\alpha^*$ as a function
of deployment scale $T$.} $\Delta\alpha^* = \alpha^*(10^8) -
\alpha^*(10^2)$. The scale effect is negligible under Optimistic
assumptions, modest under Moderate, and pronounced under
Pessimistic conditions, where $\alpha^*$ transitions from the
boundary ($\alpha^* = 0$, pure rules) to an interior solution.}
\label{tab:alpha-star}
\end{table}

\begin{figure}[htbp]
\centering
\includegraphics[width=\columnwidth]{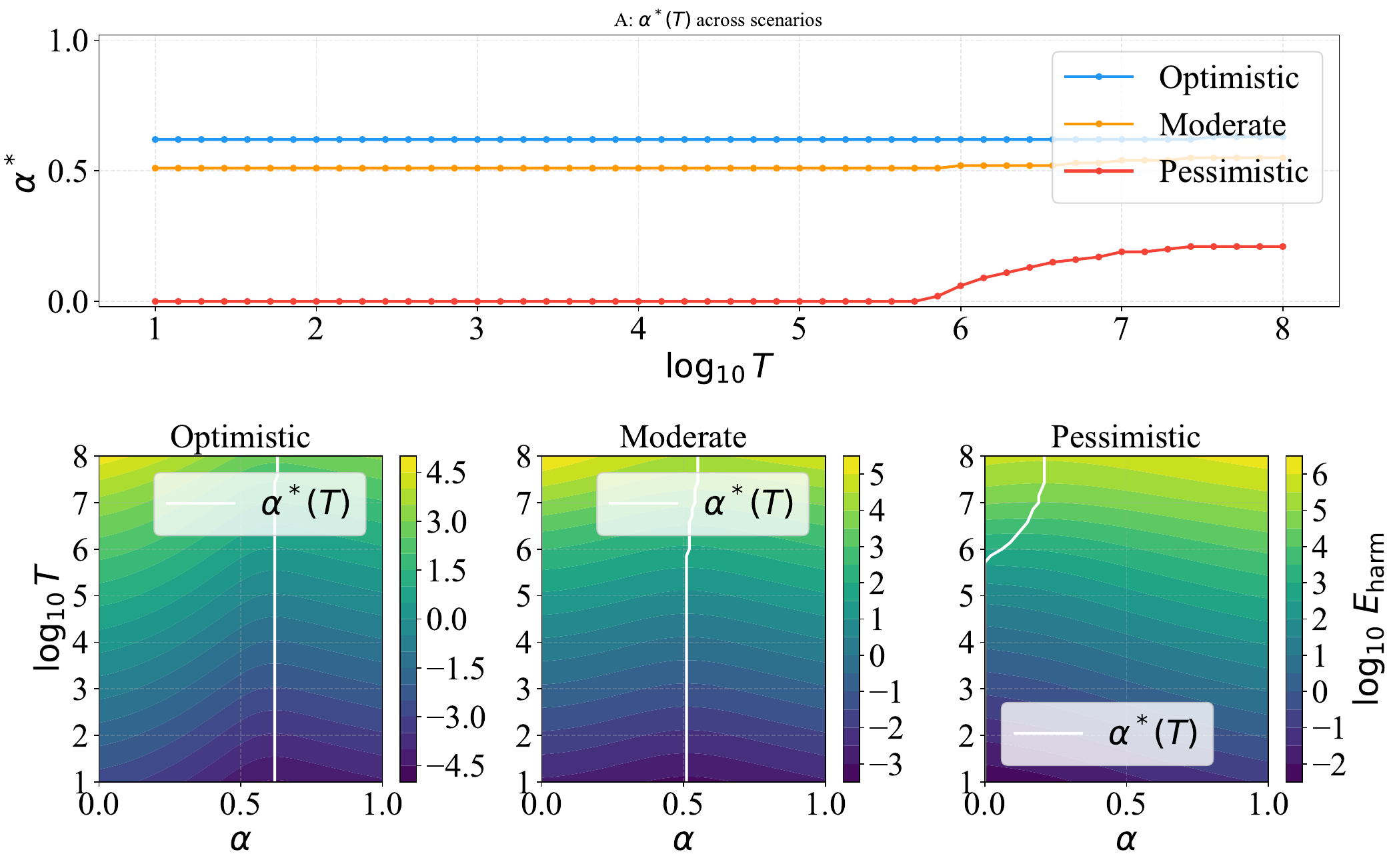}
\caption{\textbf{Optimal character weight $\alpha^*(T)$ across
three scenarios.} \textit{Top:} $\alpha^*$ as a function of
deployment scale $T$. The optimum is weakly non-decreasing in $T$;
pure character is never optimal. The Optimistic scenario yields
a nearly flat $\alpha^* \approx 0.62$ ($\Delta\alpha^* = +0.01$),
while the Pessimistic scenario exhibits a sharp transition from
$\alpha^* = 0$ (pure rules) to an interior solution near
$T \approx 10^{5.5}$ ($\Delta\alpha^* = +0.21$).
\textit{Bottom:} Contour maps of $\log_{10} E_{\mathrm{harm}}
(\alpha, T)$ for each scenario, with the $\alpha^*(T)$ trajectory
overlaid in white. The valley of minimal harm is narrow and
nearly vertical in the Optimistic case, broader and rightward-shifting
in the Moderate case, and sharply kinked in the Pessimistic case,
reflecting the phase transition visible in the top panel.}
\label{fig:alpha-star-T}
\end{figure}

Three qualitatively distinct regimes emerge. Under \textbf{Optimistic}
assumptions (strong character shaping, low fragility), $\alpha^*$ is
essentially flat at approximately~0.62 across six orders of magnitude
in~$T$. This suggests that when character-shaping technology is
sufficiently mature and fragility risk is low, the scaling law
effectively vanishes: the optimal design is insensitive to deployment
scale. Under \textbf{Moderate} assumptions, the scaling effect is
present but modest ($\Delta\alpha^* = +0.04$). Under
\textbf{Pessimistic} assumptions (weak character shaping, high
fragility), $\alpha^*$ begins at the boundary ($\alpha^* = 0.00$,
pure rules) for small~$T$ and transitions sharply to an interior
solution near $T \approx 10^{5.5}$, with a total shift of~$+0.21$.

\paragraph{Analytical characterization of the Pessimistic transition.}
The transition from $\alpha^* = 0$ to an interior solution occurs at
the critical scale $T_{\mathrm{crit}}$ where
$\partial E_{\mathrm{harm}} / \partial \alpha \big|_{\alpha=0} = 0$.
At $\alpha = 0$, the fragility cost vanishes
($p_{\mathrm{frag}}(0) = 0$ for $n \geq 1$), so the condition
reduces to a balance between two forces: the benefit of character
shaping (reducing $g_\alpha$) and the cost of filter degradation
(increasing $\varepsilon_{\mathrm{base}}$). As $M$ grows, two
effects favor character shaping: $\varepsilon(\alpha, M)$ rises
(making filter reliance costlier) and $q(M)$ rises (amplifying the
CMF channel, in which character shaping provides the sole remaining
defense via $g_\alpha < g_0$). At $M = M_{\mathrm{crit}}$, these
effects overcome the filter-degradation cost, and the optimum
detaches from the boundary. Under Pessimistic parameters,
$T_{\mathrm{crit}} \approx 10^{5.5}$, consistent with the observed
transition in Figure~\ref{fig:alpha-star-T}. We emphasize that the specific value $T_{\mathrm{crit}} \approx
10^{5.5}$ is contingent on the Pessimistic parameter settings;
under different parameterizations the transition point shifts
accordingly, though the qualitative phenomenon --- a sharp onset
of interior optimality beyond a critical scale --- is robust across
the parameter space.

\subsection{Phase Diagrams: $\Delta\alpha^* \geq 0$ Throughout the
Explored Parameter Space}\label{sec:phase}

To identify conditions under which $\alpha^*(T)$ might decrease with
scale, we compute phase diagrams over $20 \times 20$ grids of
parameter pairs, plotting
$\Delta\alpha^* = \alpha^*(10^8) - \alpha^*(10^2)$ at each cell.
Table~\ref{tab:phase} summarizes the results across all three
parameter grids: across 1{,}200 cells, $\Delta\alpha^*$ is
non-negative everywhere, with zero negative cells in any grid.

\begin{table}[h]
\centering
\small
\begin{tabular}{lccc}
\toprule
Parameter pair & Range of $\Delta\alpha^*$ & Positive & Negative \\
\midrule
$\Delta\mu \times p_{\mathrm{frag}}^{(0)}$
  & $[0.00,\; 0.67]$ & 400 & 0 \\
$\Delta\mu \times \varepsilon_{\mathrm{ceiling}}$
  & $[0.02,\; 0.12]$ & 400 & 0 \\
$\varepsilon_{\mathrm{ceiling}} \times p_{\mathrm{frag}}^{(0)}$
  & $[0.03,\; 0.06]$ & 400 & 0 \\
\bottomrule
\end{tabular}
\caption{\textbf{Phase diagram summary. Across 1{,}200 cells in
three $20 \times 20$ grids, $\Delta\alpha^*$ is non-negative
everywhere.}}
\label{tab:phase}
\end{table}

Figure~\ref{fig:phase-diagram} shows the
$\Delta\mu \times p_{\mathrm{frag}}^{(0)}$ grid in detail. The
strongest scale effect ($\Delta\alpha^* \approx 0.67$) occurs at
low $\Delta\mu$ and high $p_{\mathrm{frag}}^{(0)}$ (upper left),
where character shaping is weak and fragile, making the system most
sensitive to scale-driven filter degradation.

\begin{figure}[htbp]
\centering
\includegraphics[width=\columnwidth]{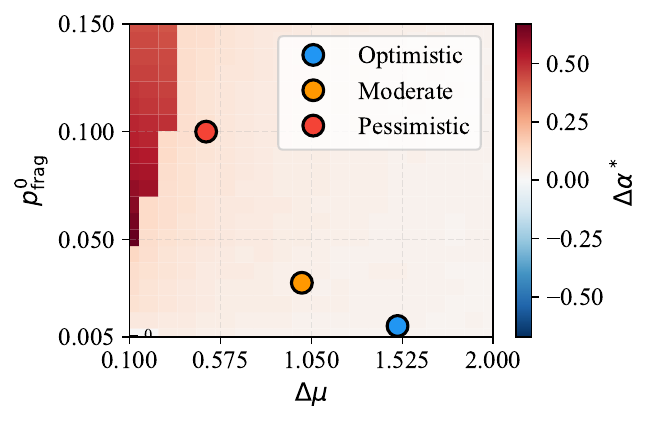}
\caption{\textbf{Phase diagram: scale-induced shift in optimal
design over the $\Delta\mu \times p_{\mathrm{frag}}^{(0)}$
parameter space.} Each cell shows
$\Delta\alpha^* = \alpha^*(10^8) - \alpha^*(10^2)$. Red indicates
that larger scale favors more character-reliant design; blue would
indicate the opposite. Across all 400 cells,
$\Delta\alpha^* \geq 0$. Colored circles mark the three scenarios.
This monotonicity is a structural property of the model (see the
Discussion).}
\label{fig:phase-diagram}
\end{figure}

This result is stronger than initially expected: no parameter
combination in the explored range produces a regime where larger
deployment scale favors more rule-reliant design. We emphasize that
this is a structural property of the current model rather than a
general empirical prediction. In our formulation, deployment scale
$T$ enters exclusively through $M = \rho_{\mathrm{edge}} T$, which
degrades runtime filters (via $\varepsilon(\alpha, M)$) and raises
CMF probability (via $q(M)$). Both channels penalize low-$\alpha$
designs: filter degradation makes rules less reliable, and CMF
eliminates the filter layer entirely, leaving only character shaping
as protection. Since $T$ has no channel through which it degrades
character shaping (i.e., $p_{\mathrm{frag}}(\alpha)$ and $g_\alpha$
are $T$-independent), the monotonicity $\Delta\alpha^* \geq 0$
follows near-tautologically from the model structure. A model in
which deployment scale also increases character fragility (e.g.,
$p_{\mathrm{frag}}(\alpha, T)$) could in principle produce
$\Delta\alpha^* < 0$ regions; we discuss this extension in
the Discussion.

\subsection{Baseline Fragility Rate Dominates the Optimal
Design}\label{sec:pfrag}

Table~\ref{tab:sensitivity} reports the sensitivity of
$\alpha^*(T = 10^6)$ to each model parameter individually. The
baseline fragility rate $p_{\mathrm{frag}}^{(0)}$ dominates: its
effect on $\alpha^*$ ($-0.50$) is nearly twice that of the next
most influential parameter ($\Delta\mu$ at $-0.27$), and all
remaining parameters shift $\alpha^*$ by less than $0.10$. The
effects of $\Delta\mu$ and $r_\sigma$ are discussed in the
Discussion.

\begin{table}[h]
\centering
\small
\begin{tabular}{lcc}
\toprule
Parameter & Swept range & $\Delta\alpha^*$ at $T = 10^6$ \\
\midrule
$p_{\mathrm{frag}}^{(0)}$ & $0.005 \to 0.40$ & $-0.50$ \\
$\Delta\mu$ & $0.2 \to 2.0$ & $-0.27$ \\
$r_\sigma$ & $0.4 \to 1.0$ & $+0.21$ \\
$n_{\mathrm{frag}}$ & $0.5 \to 4.0$ & $+0.09$ \\
$\varepsilon_{\min}$ & $0.005 \to 0.20$ & $+0.07$ \\
$\rho_{\mathrm{edge}}$ & $0.005 \to 0.40$ & $+0.02$ \\
$e_0$ & $0.05 \to 0.95$ & $+0.01$ \\
$\varepsilon_{\mathrm{ceiling}}$ & $0.10 \to 0.80$ & $0.00$ \\
\bottomrule
\end{tabular}
\caption{\textbf{Sensitivity of $\alpha^*(T=10^6)$ to individual
parameters. $p_{\mathrm{frag}}^{(0)}$ is the dominant lever by a
wide margin.}}
\label{tab:sensitivity}
\end{table}

Figure~\ref{fig:pfrag-sensitivity} shows this dominant parameter in
detail: $\alpha^*$ decreases monotonically from $0.70$ (at
$p_{\mathrm{frag}}^{(0)} = 0.005$) to $0.20$ (at
$p_{\mathrm{frag}}^{(0)} = 0.40$) --- a swing of $0.50$.

\begin{figure}[htbp]
\centering
\includegraphics[width=\columnwidth]{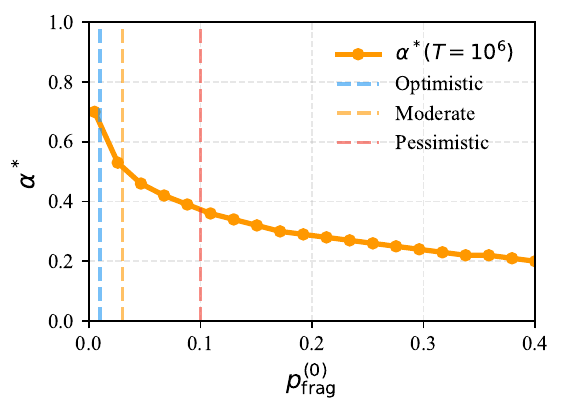}
\caption{\textbf{Baseline fragility rate $p_{\mathrm{frag}}^{(0)}$
is the dominant determinant of optimal design.} The optimal
character weight $\alpha^*(T = 10^6)$ decreases monotonically from
$0.70$ to $0.20$ as $p_{\mathrm{frag}}^{(0)}$ increases from
$0.005$ to $0.40$ (Moderate scenario baseline, varying only
$p_{\mathrm{frag}}^{(0)}$). Dashed vertical lines indicate the
default $p_{\mathrm{frag}}^{(0)}$ values for each scenario.}
\label{fig:pfrag-sensitivity}
\end{figure}

This result carries a clear practical implication: the most
consequential input to safety architecture design is the estimated
reliability of character shaping under distributional shift. If
fragility can be kept below approximately 5\%, the optimal design
allocates a majority of resources to character shaping; above 10\%,
the optimum shifts decisively toward rules.

\subsection{Proposition~1: Filter Technology Improvement Lowers
$\alpha^*$}\label{sec:prop1-result}

Figure~\ref{fig:prop1} confirms Proposition~\ref{prop:eps-min}
numerically: across all three scenarios,
$\alpha^*(T = 10^6)$ increases monotonically with
$\varepsilon_{\min}$, verifying that
$\partial \alpha^* / \partial \varepsilon_{\min} > 0$. As filter
technology improves (lower $\varepsilon_{\min}$), the optimal design
shifts toward greater reliance on rules. The effect is modest in
absolute magnitude ($\Delta\alpha^* = +0.07$ over
$\varepsilon_{\min} \in [0.005, 0.20]$) but consistent in sign
across all scenarios.

\begin{figure}[htbp]
\centering
\includegraphics[width=\columnwidth]{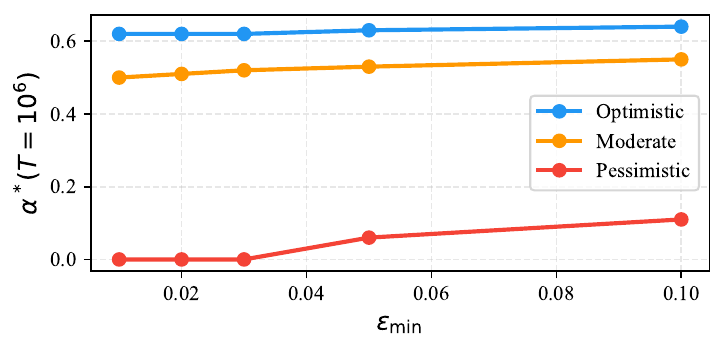}
\caption{\textbf{Numerical verification of Proposition~\ref{prop:eps-min}:
improving filter technology lowers $\alpha^*$.}
$\alpha^*(T = 10^6)$ as a function of the filter quality ceiling
$\varepsilon_{\min}$ across all three scenarios. In every case,
$\alpha^*$ increases monotonically with $\varepsilon_{\min}$,
confirming $\partial \alpha^* / \partial \varepsilon_{\min} > 0$.
As filter technology improves (lower $\varepsilon_{\min}$), the
optimal design shifts toward greater reliance on rules. The effect
is consistent in sign across scenarios, though modest in absolute
magnitude ($\Delta\alpha^* = +0.07$ over
$\varepsilon_{\min} \in [0.005, 0.20]$).}
\label{fig:prop1}
\end{figure}

\subsection{Tail Risk: CVaR Confirms Expected-Harm
Optimum}\label{sec:cvar-results}

The optimal safety design proves robust to the choice of risk
criterion across both damage models.

Under Model~A (deterministic damage), the CVaR-optimal $\alpha^*$
converges to the expected-harm-optimal $\alpha^*$ for $T \gtrsim
10^5$ (Figure~\ref{fig:cvar-comparison}). At smaller~$T$,
Monte~Carlo noise produces fluctuations of $\pm 0.10$ around the
expected-harm optimum, but no systematic divergence is observed.

\begin{figure}[t]
\centering
\includegraphics[width=\columnwidth]{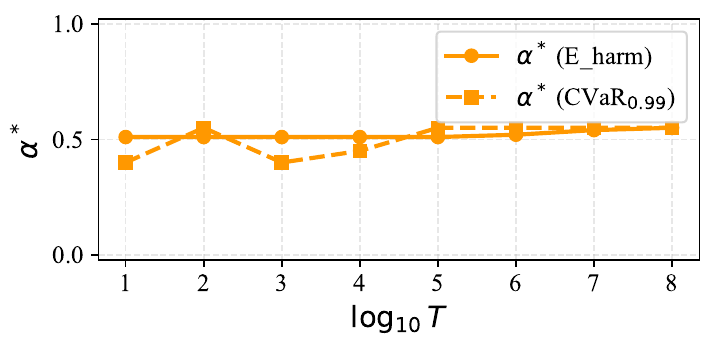}
\caption{\textbf{CVaR-based and expected-harm-based optima converge
at moderate-to-large deployment scale.} Optimal $\alpha^*$ under
the expected-harm criterion (solid) and CVaR$_{0.99}$ criterion
(dashed) as a function of deployment scale $T$ (Moderate scenario,
Model~A). For $T \gtrsim 10^5$, the two criteria yield
essentially identical optima. At smaller $T$, Monte Carlo noise
($M_{\mathrm{sim}} = 10{,}000$) produces fluctuations of
$\pm 0.10$ in the CVaR estimate, but no systematic divergence is
observed.}
\label{fig:cvar-comparison}
\end{figure}

Under Model~B (multiplicative Pareto damage), the CVaR-optimal
$\alpha^*$ is essentially invariant to the tail exponent
$\alpha_{\mathrm{PL}}$. Table~\ref{tab:cvar-alpha-pl} shows that
$\alpha^*_{\mathrm{CVaR}} = 0.50$ across all tested values of
$\alpha_{\mathrm{PL}}$; only the CVaR magnitude scales with tail
heaviness (approximately $2.5\times$ from
$\alpha_{\mathrm{PL}} = 3.0$ to $1.5$).

\begin{table}[h]
\centering
\small
\begin{tabular}{lccl}
\toprule
$\alpha_{\mathrm{PL}}$ & $\alpha^*_{\mathrm{CVaR}}$
  & CVaR & 95\% CI \\
\midrule
3.0 & 0.50 & 925  & $[873,\; 962]$ \\
2.5 & 0.50 & 1037 & $[977,\; 1077]$ \\
2.0 & 0.50 & 1288 & $[1209,\; 1329]$ \\
1.5 & 0.50 & 2352 & $[2141,\; 2592]$ \\
\bottomrule
\end{tabular}
\caption{\textbf{CVaR sensitivity to tail exponent
$\alpha_{\mathrm{PL}}$ (Moderate scenario, $T = 10^4$). The optimum
$\alpha^*$ is unchanged; only the CVaR magnitude scales with tail
heaviness.}}
\label{tab:cvar-alpha-pl}
\end{table}

This invariance arises because the Pareto context multiplier $X$
enters multiplicatively and independently of~$\alpha$: it scales
harm uniformly across all design points, preserving their relative
ranking. Within the class of $\alpha$-separable tail models, the
optimal safety design is insensitive to both the choice of risk
criterion (expected harm vs.\ CVaR) and the heaviness of the damage
tail. What changes is the \emph{magnitude} of catastrophic risk,
not the \emph{policy} that minimizes it. We discuss the scope and
limitations of this invariance in the Discussion.

\subsection{Harm Decomposition: The Role of
CMF}\label{sec:decomposition}

Figure~\ref{fig:decomposition} decomposes $E_{\mathrm{harm}}$ into
its normal-operation and CMF components at $T = 10^6$. At
low~$\alpha$, the CMF component (purple) is visible as a
non-negligible share of total harm: because $g_\alpha \approx g_0$
when character shaping is minimal, the system remains exposed when
filters are disabled during CMF. At moderate-to-high~$\alpha$,
$g_\alpha \ll g_0$ reduces post-CMF harm, and the CMF share
diminishes accordingly. Despite this structural role, the CMF
probability parameter $e_0$ has negligible influence on the optimum
($\Delta\alpha^* = +0.01$ over a 19-fold range in sensitivity
analysis), indicating that the dominant channel determining
$\alpha^*$ is the fragility--filter tradeoff in
$L_{\mathrm{normal}}$, not the CMF--character channel in
$L_{\mathrm{CMF}}$.

\begin{figure}[!htbp]
\centering
\includegraphics[width=\columnwidth]{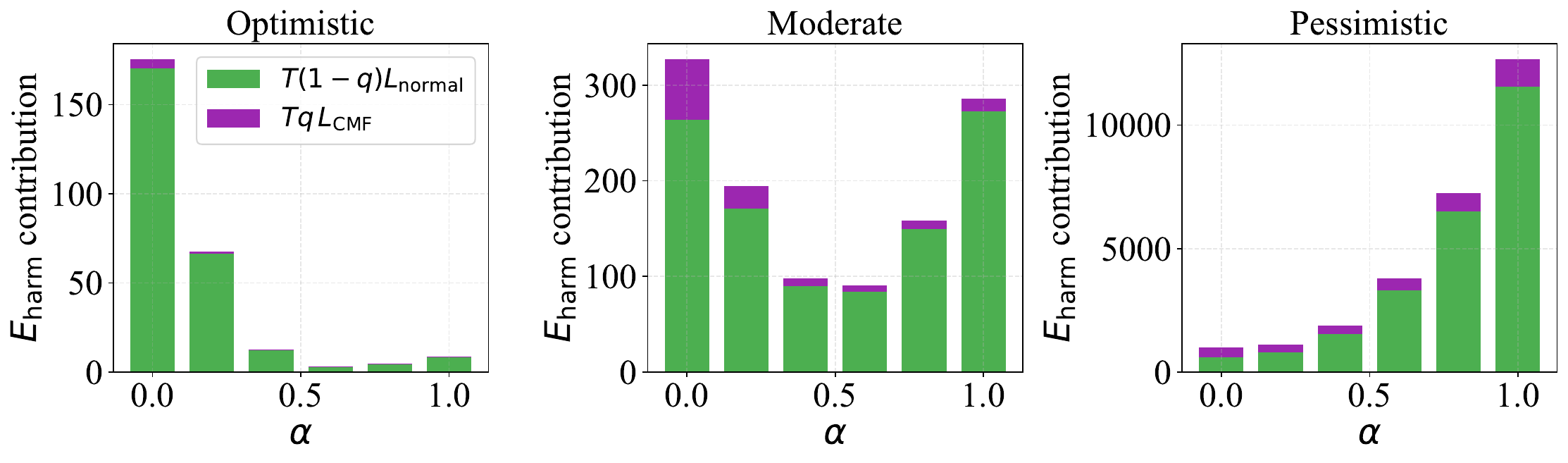}
\caption{\textbf{Decomposition of expected harm into normal-operation
and CMF components.} Stacked bars show the contribution of
$T(1-q)L_{\mathrm{normal}}$ (green) and $Tq\,L_{\mathrm{CMF}}$
(purple) to total $E_{\mathrm{harm}}$ at $T = 10^6$ for selected
values of $\alpha$ across all three scenarios. The CMF share is
largest at low $\alpha$ and diminishes at higher $\alpha$ as
character shaping reduces post-CMF harm.}
\label{fig:decomposition}
\end{figure}

\FloatBarrier

\section{Discussion}\label{sec:discussion}

The central finding of our analysis is that the baseline character
fragility rate $p_{\mathrm{frag}}^{(0)}$ dominates the optimal
safety design, moving $\alpha^*$ across a range of $0.50$ --- far
exceeding the influence of deployment scale, tail severity, or any
other model parameter (Table~\ref{tab:sensitivity}). This
dominance reflects a structural asymmetry: $p_{\mathrm{frag}}^{(0)}$
simultaneously increases the cost of high-$\alpha$ designs (through
fragility events) and decreases their benefit (through degraded
post-fragility behavior), creating a double penalty that no other
parameter imposes. In the subsections that follow, we interpret the
remaining results and discuss their limitations.

\subsection{Why Less Character Investment Can Suffice When Shaping
Is Effective}

A counterintuitive finding from the sensitivity analysis is that
stronger character-shaping capability (larger $\Delta\mu$, smaller
$r_\sigma$) is associated with \emph{lower} $\alpha^*$.
Specifically, increasing $\Delta\mu$ from $0.2$ to $2.0$ reduces
$\alpha^*$ from $0.67$ to $0.40$; decreasing $r_\sigma$ from $1.0$
to $0.4$ reduces $\alpha^*$ from $0.62$ to $0.41$.

This result reflects \emph{diminishing returns} on character
shaping. When $\Delta\mu$ is large, even modest $\alpha$ (e.g.,
$\alpha \approx 0.3$--$0.4$) achieves most of the achievable
reduction in $g_\alpha$. Beyond this point, additional investment
in character shaping yields minimal further benefit in expected harm
reduction while incurring growing fragility costs through
$p_{\mathrm{frag}}(\alpha) = p_{\mathrm{frag}}^{(0)} \cdot
\alpha^n$.

The practical implication is encouraging: a safety architecture
does not need to maximize character-shaping investment to capture
the bulk of its value. A moderate allocation to character shaping,
combined with continued investment in runtime filters, may yield a
better risk profile than an aggressive push toward full character
reliance.

We note that this result is partially a consequence of the
functional forms chosen: the Gaussian tail of $g_\alpha$ decays
exponentially in $\alpha$, while the fragility cost
$p_{\mathrm{frag}}(\alpha) = p_{\mathrm{frag}}^{(0)} \cdot
\alpha^n$ grows polynomially. This asymmetry mechanically favors
low $\alpha$ when the exponential decay is fast (large $\Delta\mu$).
Under alternative functional forms --- for instance, if fragility
also decayed exponentially beyond a threshold --- the diminishing-returns
effect could weaken or reverse. Our functional-form sensitivity
analysis (varying $n \in \{0.5, 1, 2, 3\}$) confirms that the
qualitative direction persists across power-law fragility costs,
but the quantitative magnitude of the effect is form-dependent.

This finding is consistent with the broader engineering
principle that robustness comes from diversified defenses rather
than from maximizing any single layer.

\subsection{The Scale Effect Is Real but Regime-Dependent}

The monotonicity result $\Delta\alpha^* \geq 0$ across the
explored parameter space (1{,}200 grid cells, zero negative) is a
structural property of the current model rather than a contingent
empirical finding. In our formulation, deployment scale $T$ enters
exclusively through $M = \rho_{\mathrm{edge}} \cdot T$, which
degrades runtime filters (via $\varepsilon(\alpha, M)$) and raises
CMF probability (via $q(M)$). Both channels penalize low-$\alpha$
(rule-heavy) designs. Crucially, $T$ has no channel through which
it degrades character shaping: $p_{\mathrm{frag}}(\alpha)$ and
$g_\alpha$ are $T$-independent. The monotonicity
$\Delta\alpha^* \geq 0$ therefore follows from the model's
asymmetric treatment of scale effects on the two safety layers.

As discussed in the Formal Framework,
$p_{\mathrm{frag}}$ is an intrinsic property of the trained model
--- the fraction of vulnerable input space --- and is
$T$-independent by construction. An alternative modeling choice
would treat $p_{\mathrm{frag}}$ as effectively $T$-dependent, for
instance if expanding deployment to new user populations shifts the
effective input distribution further from training, exposing
previously untested regions. Under such a reinterpretation, the
monotonicity $\Delta\alpha^* \geq 0$ would no longer be
structurally guaranteed. We regard the fixed-vulnerability
interpretation as appropriate for analyzing a given model in a
given deployment context, while the shifting-distribution extension
addresses a different question --- how safety architecture should
adapt when deployment expands across heterogeneous contexts ---
and is left to future work.

Despite this structural origin, the scale effect is quantitatively
weak in the Optimistic scenario ($\Delta\alpha^* = +0.01$). This
suggests that when character-shaping technology is mature and
fragility risk is low, the scaling law effectively disappears: the
optimal design becomes insensitive to deployment scale. This
finding carries a policy implication --- the urgency of adjusting
safety architecture as systems scale depends critically on how
fragile current character-shaping methods are. If fragility can
be driven below approximately~5\%, the scaling question becomes
moot.

\subsection{Robustness to Risk Criterion and Tail Severity}

The convergence of CVaR-based and expected-harm-based optima at
$T \gtrsim 10^5$, and the invariance of $\alpha^*$ to the Pareto
tail exponent $\alpha_{\mathrm{PL}}$, together constitute a
robustness result: within the class of $\alpha$-separable tail
models, the optimal safety design is insensitive to both the
choice of risk criterion and the heaviness of the damage
distribution. This follows structurally from the multiplicative
independence of the context multiplier~$X$ and~$\alpha$, which
preserves the relative ranking of designs across
$\alpha_{\mathrm{PL}}$ values. What changes is the
\emph{magnitude} of catastrophic risk, not the \emph{policy} that
minimizes it.

We note, however, that this $\alpha$-separability is likely a
simplification. In real systems, different design regimes may
produce qualitatively different failure modes with distinct tail
structures, breaking the independence between $X$ and $\alpha$.
If the tail exponent were itself $\alpha$-dependent, the
CVaR-optimal $\alpha^*$ could diverge from the expected-harm
optimum. Developing models that capture such design-dependent tail
behavior is an important direction for future work.

\subsection{The Quantitative Role of Common-Mode Failure}

Although the CMF channel is quantitatively second-order at the
optimum (see Harm Decomposition above), it plays an essential
\emph{qualitative} role: it is the mechanism that makes rules-only
designs ($\alpha = 0$) sharply suboptimal at large~$T$. When CMF
disables the entire filter layer, systems with $\alpha = 0$ lose
all protection (since $g_0$ is large), whereas systems with
$\alpha > 0$ retain the ``inner wall'' of character shaping
($g_\alpha < g_0$) --- the asymmetry that makes $\alpha^* > 0$ at
large~$T$. This may also reflect our independence assumption:
correlated failure modes that simultaneously disable filters and
trigger character fragility could amplify the CMF channel's
importance, an extension we leave to future work.

\subsection{Limitations}

Several simplifications limit the scope of our conclusions. The
action space is one-dimensional and Gaussian, capturing the
safety--harm axis but discarding multi-attribute structure and
heavy-tailed action behavior that could alter $g_\alpha$. The model
is static: it compares equilibria across scales without capturing
adversary adaptation, filter updates, or fragility evolution over
time. The $\alpha$ parameterization treats training-time and
inference-time effort as a shared resource pool and is best read
as a proxy for relative emphasis rather than a literal budget; its
three coupled roles --- training allocation, deployment-time
dependence on shaping, and fragility susceptibility --- may be
partially independent in practice. Tail risk is captured only on
the severity side: dependence-side heavy tails, where incident
occurrences cluster in time, are not modeled and would carry
different design implications. Finally, key parameters ---
particularly $p_{\mathrm{frag}}^{(0)}$ and $\alpha_{\mathrm{PL}}$
--- lack direct empirical calibration, which our scenario-based
approach mitigates but does not eliminate; empirical estimation of
character fragility under distributional shift remains an important
direction for future work.

\section{Conclusion}

This paper introduced a stylized comparative-statics model for
analyzing the optimal balance between character shaping and rule
enforcement in AI safety design as a function of deployment scale.
Our analysis yields a clear and actionable conclusion: the most
consequential determinant of optimal safety architecture is the
baseline character fragility rate $p_{\mathrm{frag}}^{(0)}$ ---
the probability that shaped safe behavior degrades or collapses
under novel conditions. This single parameter moves the optimal
design across a range of $0.50$, from strongly character-reliant
to strongly rule-reliant, far exceeding the influence of deployment
scale, tail severity, filter quality, or common-mode failure
probability.

This finding carries direct implications for the research agenda
in AI safety. If character fragility is the dominant lever, then
three priorities follow.

First, \textbf{measuring fragility is essential}. At present, no
standardized metric exists for quantifying how reliably a model's
shaped behavior generalizes beyond its training distribution. Our
results show that even order-of-magnitude uncertainty in
$p_{\mathrm{frag}}^{(0)}$ --- e.g., whether it is 1\% or 10\% ---
leads to qualitatively different optimal designs. Developing
rigorous, reproducible benchmarks for character fragility under
distributional shift is therefore a prerequisite for principled
safety architecture decisions.

Second, \textbf{reducing fragility may be more valuable than
improving either character shaping or filter quality in isolation}.
Our sensitivity analysis shows that improving filter technology
($\varepsilon_{\min}$) shifts $\alpha^*$ by only $+0.07$, and
increasing character-shaping strength ($\Delta\mu$) exhibits
diminishing returns. By contrast, reducing
$p_{\mathrm{frag}}^{(0)}$ from $0.10$ to $0.01$ shifts $\alpha^*$
by approximately $+0.30$, unlocking a fundamentally different and
more efficient safety regime. Research on training methods that
produce robust generalization --- including adversarial training
for distributional robustness, mechanistic interpretability to
identify fragile internal representations, and evaluation
protocols that systematically probe out-of-distribution behavior
--- is therefore of first-order importance.

Third, \textbf{the scaling question becomes moot if fragility is
sufficiently low}. Under our Optimistic scenario
($p_{\mathrm{frag}}^{(0)} = 0.01$), the optimal design is
essentially invariant to deployment scale
($\Delta\alpha^* = +0.01$ across six orders of magnitude in $T$).
This suggests that the urgency of adapting safety architecture to
scale is itself contingent on the current state of
character-shaping reliability. If fragility can be driven below
approximately 5\%, the design question simplifies from ``how
should safety change as we scale?'' to ``what is the right hybrid
at any scale?'' --- and the answer is stable.

Our model also establishes several structural results that hold
across the explored parameter space: the optimum is never pure
character ($\alpha^* < 1$ in all cases), and is a hybrid in most
regimes; the optimal character weight is
weakly non-decreasing in deployment scale; the optimal policy is
robust to the choice of risk criterion (expected harm vs.\ CVaR);
and stronger character-shaping capability is associated with
lower optimal $\alpha^*$ due to diminishing returns.

These findings should be interpreted within the limitations of a
stylized model. The one-dimensional action space, Gaussian
behavioral distribution, static analysis, and
$\alpha$-independent tail structure are simplifications that
future work should relax. In particular, models in which
deployment scale also increases character fragility could yield
reversed scaling predictions, and design-dependent tail structures
could break the CVaR--expected-harm equivalence observed here.

Nevertheless, the core message is clear. The most important
question for AI safety design is not how large a system will be
deployed, nor how severe the worst-case damage might be, but how
reliably the system's shaped character holds when it encounters
conditions its designers did not foresee. Answering this question
--- through measurement, through training methodology, and through
rigorous evaluation --- is the most direct path to safety
architectures that scale.
The reassuring news is that the answer does not depend on
how many patients the model is serving --- the only thing
a patient should have to worry about is whether the one
they are asking can be trusted.
\label{TEMP-bodyend}

\bibliography{aaai2026}

\appendix

\section{Appendix A: Scenario Parameters}

Table~\ref{tab:scenario_params} lists all scenario-varying
parameters. Parameters constant across scenarios are:
$\mu_0 = 0.0$, $\sigma_0 = 1.0$, $\tau = -2.0$, $k = 1.0$,
$M_{\mathrm{ref}} = 10^6$, $n_{\mathrm{frag}} = 2$.

\begin{table}[ht]
\centering
\caption{Scenario parameters. Parameters constant across all
scenarios are listed in the text above.}
\label{tab:scenario_params}
\small
\begin{tabular}{llccc}
\toprule
Group & Parameter & Opt. & Mod. & Pess. \\
\midrule
\multirow{2}{*}{Character shaping}
  & $\Delta\mu$       & 1.5  & 1.0  & 0.5  \\
  & $r_\sigma$        & 0.6  & 0.7  & 0.9  \\
\midrule
\multirow{3}{*}{Filter quality}
  & $\varepsilon_{\min}$          & 0.02 & 0.03 & 0.05 \\
  & $\varepsilon_{\max,\mathrm{base}}$ & 0.10 & 0.15 & 0.25 \\
  & $\varepsilon_{\mathrm{ceiling}}$   & 0.25 & 0.30 & 0.50 \\
\midrule
\multirow{3}{*}{Deployment scale}
  & $\rho_{\mathrm{edge}}$ & 0.01 & 0.05 & 0.10 \\
  & $\beta_d$              & 0.5  & 1.0  & 2.0  \\
  & $d_0$                  & 0.05 & 0.10 & 0.30 \\
\midrule
\multirow{2}{*}{CMF}
  & $\beta_q$         & 0.3  & 0.5  & 1.0  \\
  & $e_0$             & 0.2  & 0.3  & 0.5  \\
\midrule
\multirow{4}{*}{Fragility}
  & $p_{\mathrm{frag}}^{(0)}$          & 0.01 & 0.03 & 0.10 \\
  & $\mu_{\mathrm{frag}}$              & 0.0  & $-0.5$ & $-1.0$ \\
  & $\sigma_{\mathrm{frag}}$           & 1.0  & 1.2  & 1.5  \\
  & $\varepsilon_{\mathrm{frag,factor}}$ & 1.0  & 1.0  & 2.0  \\
\midrule
Tail index
  & $\alpha_{\mathrm{PL}}$ & 3.0  & 2.5  & 2.0  \\
\bottomrule
\end{tabular}
\end{table}

\end{document}